\title{New Insights into Learning with Correntropy Based Regression}
\begin{document}

    \date{}

    \author[]{Yunlong Feng \thanks{ylfeng@albany.edu}}
    \affil[]{Department of Mathematics and Statistics, University at Albany}

    \maketitle

\begin{abstract}
\noindent Stemming from information-theoretic learning, the correntropy criterion and its applications to machine learning tasks have been extensively studied and explored. Its application to regression problems leads to the robustness enhanced regression paradigm -- namely, correntropy based regression. Having drawn a great variety of successful real-world applications, its theoretical properties have also been investigated recently in a series of studies from a statistical learning viewpoint.  The resulting big picture is that correntropy based regression regresses towards the conditional mode function or the conditional mean function robustly under certain conditions. Continuing this trend and going further, in the present study, we report some new insights into this problem. First, we show that under the additive noise regression model, such a regression paradigm can be deduced from minimum distance estimation, implying that the resulting estimator is essentially a minimum distance estimator and thus possesses robustness properties. Second, we show that the regression paradigm, in fact, provides a unified approach to regression problems in that it approaches the conditional mean, the conditional mode, as well as the conditional median functions under certain conditions. Third, we present some new results when it is utilized to learn the conditional mean function by developing its error bounds and exponential convergence rates under conditional $(1+\epsilon)$-moment assumptions. The saturation effect on the established convergence rates, which was observed under $(1+\epsilon)$-moment assumptions, still occurs, indicating the inherent bias of the regression estimator. These novel insights deepen our understanding of correntropy based regression, help cement the theoretic correntropy framework, and also enable us to investigate learning schemes induced by general bounded nonconvex loss functions. 
\end{abstract}

\section{Introduction and Preliminaries}
In this paper, we are concerned with the regression problem, which aims at learning a regression function between input and output from given observations drawn from some unknown distribution. Such a regression function could typically be the conditional mean function, the conditional median function, or the conditional mode function, depending on the needs. To mathematically describe a regression procedure, let us denote $X$ as the input variable that takes value in a compact subset $\mathcal{X}\subset \mathbb{R}^d$ and $Y$ the continuous output variable taking values in $\mathbb{R}$. Assume that the given observations $\mathbf{z}=\{(x_i,y_i)\}_{i=1}^n$ are drawn independently from a certain unknown probability distribution $\rho$ over $\mathcal{X}\times\mathcal{Y}$ with $\rho_\mathsmaller{X}$ being its marginal distribution and $\rho_\mathsmaller{Y|X}$ the conditional distribution conditioned on $X$. For any fixed realization of $X$, the goal of regression is to learn a location parameter of the conditional distribution $\rho_\mathsmaller{Y|X}$. Recall that mean, median, and mode are three canonical location parameters of a probability distribution. And as such, typically, a regression paradigm regresses towards the conditional mean function $\mathbb{E}(Y|X)$, conditional median function ${\sf{median}}(Y|X)$, or the conditional mode function ${\sf{mode}}(Y|X)$. The resulting regression procedure is termed as mean regression, median regression, or modal regression, respectively. In this study, we terminologically term the three functions as \textit{location functions}. Additionally, we term the conditional quantile function and the conditional expectile function as \textit{generalized location functions}. Under these terminologies, learning for regression essentially learns a (generalized) location function. To further our discussion, throughout this study we consider the following general additive noise regression model 
\begin{align}\label{data_generating}
Y=f^\star(X)+\varepsilon,
\end{align}
where $\varepsilon$ is the noise centered around $0$. It is obvious that by assuming that $\mathbb{E}(\varepsilon|X)=0$, ${\sf{median}}(\varepsilon|X)=0$, or ${\sf{mode}}(\varepsilon|X)=0$, the underlying truth function $f^\star$ is essentially a location function of the conditional distribution $\rho_\mathsmaller{Y|X}$ and so our purpose of regression is to learn such a location function. It should be remarked that the above three location assumptions on the conditional noise distribution are, in fact, mild ones as otherwise, one can always translate the distributions of $\varepsilon|X$ to fulfill one of these assumptions. 

In the statistics and machine learning literature, one of the most frequently employed approaches to learning $f^\star$ is the empirical risk minimization induced by the least squares loss, which leads to the least squares regression and can be deduced from maximum likelihood estimation under the Gaussian noise assumption. However, in the presence of misspecification of the likelihood function, learning $f^\star$ through least squares regression may not work well due to the use of the least squares loss that amplifies large residuals. To address this problem, tremendous approaches have been proposed in the literature, a representative one of which is M-estimation \cite{huber2009robust,hampel2011robust,maronna2006robust}. The idea is to consider maximum likelihood estimation of the location parameter of a distribution based on longer-tailed distributional assumptions. Carrying over the idea to regression problems, one arrives at various regression M-estimators. In the literature, many other efforts have also been made to address this problem beyond the maximum likelihood framework. 

In this study, we will investigate an alternative approach that stems from information-theoretic learning, namely, Maximum Correntropy Criterion based Regression (MCCR) \cite{liu2007correntropy,principe2010information,chen2012maximum}, and has been finding a wealth of applications in machine learning and data science \cite{he2011maximum,zhang2013robust,chen2016generalized,chen2017maximum}.

\subsection{MCCR: An Information-Theoretic Learning Approach to Regression}
With the $n$ i.i.d observations $\mathbf{z}=\{(x_i,y_i)\}_{i=1}^n$, MCCR can be formulated as
\begin{align}\label{MCCR}
f_{\mathbf{z},\sigma}=\arg\max_{f\in\mathcal{H}}\frac{1}{n}\sum_{i=1}^n \exp\left(-\frac{(y_i-f(x_i))^2}{\sigma^2}\right),
\end{align}
where $\mathcal{H}$ is a hypothesis space chosen as a compact subset of $C(\mathcal{X})$ in this study and $\sigma>0$ a scale parameter. The motivation of introducing MCCR comes from the minimization of the Renyi's quadratic entropy of the residual, i.e., $-\log \mathbb{E}p_e(e)$ where $p_e$ is the density function of the residual variable $e$ in regression. Notice that minimizing $-\log\mathbb{E}p_e(e)$ can be equivalently cast as the maximization of $\mathbb{E}p_e(e)$. Assuming a Gaussian prior on the residual and considering the empirical counterpart of $\mathbb{E}p_e(e)$, one then has the MCCR formulation in \eqref{MCCR}. Such an entropy minimization interpretation of \eqref{MCCR} illustrates the terminology -- \textit{correntropy} \cite{liu2007correntropy,principe2010information}. 
 
It is obvious that MCCR can be also reformulated using the language of empirical risk minimization (ERM), as done recently in \cite{fenglearning} where, by introducing the loss function $\ell_\sigma(t)=\sigma^2(1-e^{-t^2/\sigma^2})$, the following ERM scheme is studied
\begin{align}\label{ERM_MCCR}
f_{\mathbf{z},\sigma}=\arg\min_{f\in\mathcal{H}}\frac{1}{n}\sum_{i=1}^n \ell_\sigma(y_i-f(x_i)).  
\end{align}
It is due to this reasoning that $f_{\mathbf{z},\sigma}$ is traditionally viewed as an M-estimator. Recently, its theoretical properties have been explored continuously in a series of studies. For instance, inspired by the work in \cite{hu2013learning,fan2014consistency}, \cite{fenglearning} demonstrated that MCCR can deal with mean regression robustly in the sense that only a fourth-moment condition on the response variable is needed to guarantee its convergence. Such a moment condition is further relaxed to the $(1+\epsilon)$-th order moment condition in \cite{feng2020learning}, encompassing  the case when the noise possesses infinite variance; in \cite{feng2017statistical}, it is shown that MCCR performs modal regression under certain restrictions to the noise; using Huber's contamination model for modeling outliers, \cite{feng2018learning} makes some efforts in order to explain the outlier-robustness of MCCR and shows that it can be utilized to learn $f^\star$ in the presence of outliers.  Learning theory assessments from algorithmic viewpoints are conducted in \cite{guo2018gradient,hu2020kernel} and assessments from an optimization viewpoint are conducted in \cite{syed2014optimization}. In addition, there are also some existing studies in the literature investigating the penalized version of the ERM scheme \eqref{ERM_MCCR} under bounded noise assumptions; see e.g., \cite{chen2018kernel,lv2019optimal,li2020error}. Here, we will take a step further towards the understanding of the correntropy based regression scheme 
\eqref{ERM_MCCR} by providing new insights into it.

\subsection{New Insights Brought by This Study}
In this study, the following new insights will be brought to attention: first, under the additive noise regression model \eqref{data_generating}, we show that $f_{\mathbf{z},\sigma}$ can be retrieved from minimum distance estimation. Here, the distance refers to the squared distance between the two densities $p_\mathsmaller{\varepsilon|X}$ and $p_\mathsmaller{E_f|X}$ integrated over $\mathcal{X}$ where $p_\mathsmaller{\varepsilon|X}$ is the conditional density of the noise $\varepsilon$ and $p_\mathsmaller{E_f|X}$ the conditional density of the residual variable $Y-f(X)$ for any $f:\mathcal{X}\rightarrow\mathbb{R}$. To put it simply, $f_{\mathbf{z},\sigma}$ is essentially a minimum distance estimator and so may outperform other regression estimators in terms of robustness; second, it is shown that MCCR provides a unified approach to learning location functions in that under different location assumptions on the distributions of $\varepsilon|X$, $f^\star$ may represent different location functions. The adaptiveness of MCCR allows us to tune the scale parameter $\sigma$ to adjust the regression function to which the regression scheme targets; third, when mean regression is of interest, we show that improved exponential type convergence rates of $f_{\mathbf{z},\sigma}$ can be established under the conditional $(1+\epsilon)$-moment assumption. Moreover, the saturation effect, which is observed in an existing study in \cite{feng2020learning} under a relaxed moment assumption, still occurs under the conditional $(1+\epsilon)$-moment assumption. More detailed speaking, there exists a threshold value of $\epsilon$ above which the convergence rates may be independent of $\epsilon$. As a result, imposing stronger moment conditions may not help improve the established convergence rates of the estimator, implying the existence of an inherent bias in mean regression. These novel insights can help cement the theoretical correntropy framework developed recently in \cite{fenglearning,feng2017statistical,feng2018learning,feng2020learning}.      

The rest of this paper is organized as follows. In Section \ref{sec::interpretation}, we report a minimum distance estimation interpretation of MCCR. Section \ref{sec::Unified} illustrates the unified approach that MCCR provides in learning location functions. Specifically, Section \ref{subsec::mean} is devoted to the investigation of mean regression under weak moment conditions. Section \ref{subsec::modal} is concerned with regression towards the conditional mode function. Section \ref{subsec::median} discusses the case of learning the conditional median function under noise restrictions. Numerical validations are provided in Section \ref{sec::experiments}. We conclude the paper in Section \ref{sec::conclusion} and summarize the related future studies.

\section{MCCR: A Minimum Distance Estimation Interpretation}\label{sec::interpretation}

Under the additive noise data-generating model \eqref{data_generating}, for any measurable function $f:\mathcal{X}\rightarrow \mathbb{R}$, we denote $E_f$ as the random variable defined by the residual between $Y$ and $f(X)$, i.e., $E_f=Y-f(X)$. Then, for any fixed realization of $X$, say $x$, the density function of $E_f|X=x$ can be obtained by translating that of $\varepsilon|X=x$ horizontally $f^\star(x)-f(x)$ units. Consequently, the density of $E_f|X=x$ can be expressed as
\begin{align*}
p_\mathsmaller{E_f|X=x}(t)=p_\mathsmaller{\varepsilon|X=x}(t+f(x)-f^\star(x)).
\end{align*}
Similarly, we also have
\begin{align*}
p_\mathsmaller{\varepsilon|X=x}(t)=p_\mathsmaller{E_f|X=x}(t+f^\star(x)-f(x)).
\end{align*}
Moreover, as realized in \cite{fan2014consistency}, 
\begin{align*}
p_\mathsmaller{E_f}(t)=\int_{\mathcal{X}}p_\mathsmaller{\varepsilon|X=x}(t+f(x)-f^\star(x))\mathrm{d}\rho_X(x)
\end{align*}
defines a density function of the random variable $E_f$, and 
\begin{align*}
p_\mathsmaller{\varepsilon}(t)=\int_{\mathcal{X}}p_\mathsmaller{E_f|X=x}(t+f^\star(x)-f(x))\mathrm{d}\rho_X(x)
\end{align*}
defines a density function of the random variable $\varepsilon$.  In what follows, we consider only the case where $p_\mathsmaller{\varepsilon|X}$ is uniformly bounded by a constant that is independent of $X$. 

In learning for regression problems, we are concerned with the estimation of the unknown truth function $f^\star$. If a function $f$ is exactly the same as the function $f^\star$ on $\mathcal{X}$, then according to the above statements, $p_\mathsmaller{E_f|X}$ would be exactly the same as $p_\mathsmaller{\varepsilon|X}$ pointwisely, namely, the translation between the two densities would be zero for any fixed $x$. If for any fixed $x$, $f$ is, pointwisely, a good estimate of $f^\star$, then $p_\mathsmaller{E_f|X}$ may also mimic $p_\mathsmaller{\varepsilon|X}$ well. In other words, $p_\mathsmaller{E_f|X}$ may not departure too much from $p_\mathsmaller{\varepsilon|X}$. To measure such a deviation between the two distributions, we define the following integrated squared density-based distance.  

\begin{definition}[Integrated Squared Density-based Distance]\label{definition_distance}
	Let $\mathcal{M}$ be the function set that consists of all bounded measurable functions $f:\mathcal{X}\rightarrow [-M,M]$ with $M>0$ a constant. For any $f\in\mathcal{M}$, the integrated squared density-based distance between $p_\mathsmaller{E_f}$ and $p_\mathsmaller{\varepsilon}$, ${\sf{dist}}(p_\mathsmaller{E_f},p_\mathsmaller{\varepsilon})$, is defined as
	\begin{align*}
	{\sf{dist}}(p_\mathsmaller{E_f},p_\mathsmaller{\varepsilon}):=\int_\mathcal{X}\int_{-\infty}^{+\infty}(p_\mathsmaller{E_f|X=x}(t)-p_\mathsmaller{\varepsilon|X=x}(t))^2\mathrm{d}t\,\mathrm{d}\rho_X(x).
	\end{align*}
	\end{definition}
Throughout this paper, we assume that the truth function $f^\star$ is bounded by $M$, i.e., $\|f^\star\|_\infty\leq M$. It is easy to see that the square root of ${\sf{dist}}(p_\mathsmaller{E_f},p_\mathsmaller{\varepsilon})$ defines a metric between $p_\mathsmaller{E_f}$ and $p_\mathsmaller{\varepsilon}$.  In particular, if $f$ equals $f^\star$ on $\mathcal{X}$, then we have ${\sf{dist}}(p_\mathsmaller{E_f},p_\mathsmaller{\varepsilon})=0$. These observations, together with Definition \ref{definition_distance}, remind us that, if one would like to find a good estimate of the unknown location function $f^\star$ within a hypothesis space $\mathcal{H}$, then a possible strategy is to look for a function in $\mathcal{H}$, say $f_\mathcal{H}$, such that 
\begin{align}\label{density_based_strategy}
f_\mathcal{H}=\arg\min_{f\in\mathcal{H}}{\sf{dist}}(p_\mathsmaller{E_f},p_\mathsmaller{\varepsilon}).
\end{align}  
That is, one may seek the minimizer of the functional ${\sf{dist}}(p_\mathsmaller{E_f},p_\mathsmaller{\varepsilon})$ with respect to $f$ over $\mathcal{H}$ and use it to approximate $f^\star$. Notice that both $p_\mathsmaller{E_f}$ and $p_\mathsmaller{\varepsilon}$ are unknown and $p_\mathsmaller{\varepsilon}$ is not directly accessible through observations due to the unknown $f^\star$. However, the following theorem reminds us that $f_\mathcal{H}$ defined above may still be approached empirically.

\begin{theorem}\label{population_risk}
Let $f_\mathcal{H}$ be defined in \eqref{density_based_strategy}. Then we have the following relation
	\begin{align*}
	f_\mathcal{H}=\arg\max_{f\in\mathcal{H}}\mathbb{E}p_\mathsmaller{\varepsilon|X}(Y-f(X)),
	\end{align*}
	where the expectation is taken jointly with respect to $X$ and $Y$.
\end{theorem}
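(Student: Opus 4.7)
The plan is to reduce the minimization of ${\sf{dist}}(p_\mathsmaller{E_f},p_\mathsmaller{\varepsilon})$ to the maximization of a cross-term that can be rewritten as an expectation under $\rho$. First, I would expand the integrated square inside the definition of ${\sf{dist}}$ into three pieces:
\begin{align*}
{\sf{dist}}(p_\mathsmaller{E_f},p_\mathsmaller{\varepsilon}) = \int_\mathcal{X}\int (p_\mathsmaller{E_f|X=x}(t))^2 \, \mathrm{d}t \, \mathrm{d}\rho_X(x) + \int_\mathcal{X}\int (p_\mathsmaller{\varepsilon|X=x}(t))^2 \, \mathrm{d}t \, \mathrm{d}\rho_X(x) - 2 C(f),
\end{align*}
where the cross-term is $C(f) := \int_\mathcal{X}\int p_\mathsmaller{E_f|X=x}(t)\, p_\mathsmaller{\varepsilon|X=x}(t) \, \mathrm{d}t \, \mathrm{d}\rho_X(x)$.

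Next I would use the translation identity $p_\mathsmaller{E_f|X=x}(t)=p_\mathsmaller{\varepsilon|X=x}(t+f(x)-f^\star(x))$ recalled in the excerpt, together with a change of variable $s=t+f(x)-f^\star(x)$, to show that the first summand equals $\int_\mathcal{X}\int (p_\mathsmaller{\varepsilon|X=x}(s))^2 \, \mathrm{d}s \, \mathrm{d}\rho_X(x)$, which is independent of $f$. Since the second summand is also manifestly independent of $f$, minimizing ${\sf{dist}}(p_\mathsmaller{E_f},p_\mathsmaller{\varepsilon})$ over $f\in\mathcal{H}$ is equivalent to maximizing $C(f)$ over $f\in\mathcal{H}$.

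The remaining step is to identify $C(f)$ with $\mathbb{E}p_\mathsmaller{\varepsilon|X}(Y-f(X))$. Conditioning on $X=x$ and using the additive noise model \eqref{data_generating}, the conditional density of $Y$ given $X=x$ is $p_\mathsmaller{Y|X=x}(y)=p_\mathsmaller{\varepsilon|X=x}(y-f^\star(x))$, hence
\begin{align*}
\mathbb{E}\bigl[p_\mathsmaller{\varepsilon|X}(Y-f(X))\,\big|\,X=x\bigr]
= \int p_\mathsmaller{\varepsilon|X=x}(y-f(x))\,p_\mathsmaller{\varepsilon|X=x}(y-f^\star(x))\,\mathrm{d}y.
\end{align*}
Substituting $u=y-f^\star(x)$ converts this into $\int p_\mathsmaller{\varepsilon|X=x}(u+f^\star(x)-f(x))\,p_\mathsmaller{\varepsilon|X=x}(u)\,\mathrm{d}u$, which, using the translation identity in the reverse direction, equals $\int p_\mathsmaller{E_f|X=x}(u)\,p_\mathsmaller{\varepsilon|X=x}(u)\,\mathrm{d}u$ after matching the two forms of the translation. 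Integrating against $\rho_X$ then yields $C(f)$, completing the identification.

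The argument is essentially a direct calculation, and I do not anticipate a conceptual obstacle; the only subtlety is the bookkeeping around two applications of the translation identity with opposite signs, which need to be reconciled by a change of variable in the inner integral. Uniform boundedness of $p_\mathsmaller{\varepsilon|X}$, assumed just before the theorem, guarantees that all integrals appearing above are finite and that Fubini's theorem applies when interchanging the order of integration over $\mathcal{X}$ and over the real line.
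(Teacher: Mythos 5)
Your proposal is correct and follows essentially the same route as the paper: expand the integrated squared distance into three terms, use the translation identity plus a change of variable to show the two squared-density terms are independent of $f$, and reduce the problem to maximizing the cross term, which is identified with $\mathbb{E}p_\mathsmaller{\varepsilon|X}(Y-f(X))$. Your treatment of the final identification (conditioning on $X=x$ and reconciling the opposite-sign shifts by a change of variable) is in fact more explicit than the paper's, which simply asserts that equality.
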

\begin{proof}
	To prove the statement, we first recall that  $f_\mathcal{H}=\arg\min_{f\in\mathcal{H}}{\sf{dist}}(p_\mathsmaller{E_f},p_\mathsmaller{\varepsilon})$, where
	\begin{align*}
	{\sf{dist}}(p_\mathsmaller{E_f},p_\mathsmaller{\varepsilon})&=\int_\mathcal{X}\int_{-\infty}^{+\infty}(p_\mathsmaller{E_f|X=x}(t)-p_\mathsmaller{\varepsilon|X=x}(t))^2\mathrm{d}t\mathrm{d}\rho_X(x)\\
	&=\int_\mathcal{X}\left[\int_{-\infty}^{+\infty}(p_\mathsmaller{E_f|X=x}(t))^2\mathrm{d}t-2\int_{-\infty}^{+\infty}p_\mathsmaller{E_f|X=x}(t)p_\mathsmaller{\varepsilon|X=x}(t)\mathrm{d}t+\int_{-\infty}^{+\infty}(p_\mathsmaller{\varepsilon|X=x}(t))^2\mathrm{d}t\right]\mathrm{d}\rho_X(x). 
	\end{align*}
	Note that the third term 
	\begin{align*}
	\int_\mathcal{X}\int_{-\infty}^{+\infty}(p_\mathsmaller{\varepsilon|X=x}(t))^2\mathrm{d}t\mathrm{d}\rho_X(x)
	\end{align*}
	is independent of $f$. Moreover, regarding the first term, we have the following relations
	\begin{align*}
	&\int_\mathcal{X}\int_{-\infty}^{+\infty}(p_\mathsmaller{E_f|X=x}(t))^2\mathrm{d}t\mathrm{d}\rho_X(x)\\
	=&\int_\mathcal{X}\int_{-\infty}^{+\infty}(p_\mathsmaller{\varepsilon|X=x}(t+f(x)-f^\star(x)))^2\mathrm{d}t\mathrm{d}\rho_X(x)\\
	=&\int_\mathcal{X}\int_{-\infty}^{+\infty}(p_\mathsmaller{\varepsilon|X=x}(t))^2\mathrm{d}t\mathrm{d}\rho_X(x),
	\end{align*}
	which imply that it is also independent of $f$. On the other hand, we have 
	\begin{align*}
	f_\mathcal{H}&=\arg\max_{f\in\mathcal{H}}\int_{\mathcal{X}}\int_{-\infty}^{+\infty}p_\mathsmaller{E_f|X=x}(t)p_\mathsmaller{\varepsilon|X=x}(t)\mathrm{d}t\mathrm{d}\rho_{X}(x)\\
	&=\arg\max_{f\in\mathcal{H}}\mathbb{E}p_\mathsmaller{\varepsilon|X}(Y-f(X)),
	\end{align*}
	where the above expectation operation is taken jointly with respect to $X$ and $Y$. This completes the proof of Theorem \ref{population_risk}. 
\end{proof}

Theorem \ref{population_risk} holds because the data-generating model \eqref{data_generating} defines a location family. Analogously, one can also prove that  $f_\mathcal{H}=\arg\max_{f\in\mathcal{H}}\mathbb{E}p_\mathsmaller{E_f|X}(Y-f^\star(X))$. While $f_\mathcal{H}$ is not directly accessible as mentioned above, one may use its empirical counterpart to approach it by assuming a prior distribution to the noise variable $\varepsilon$. Assuming a Gaussian prior, one then arrives at the formulation of the correntropy based regression scheme \eqref{MCCR}. Therefore, MCCR can be both interpreted from an information-theoretic learning viewpoint and a minimum distance estimation viewpoint, though the latter one requires the location model assumption. Such a minimum distance estimation interpretation of MCCR can help explain its robustness merit in learning problems as the robustness of minimum distance estimators has been extensively studied; see e.g., \cite{donoho1988automatic,basu2011statistical}.

\section{MCCR: A Unified Approach to Learning Location Functions}\label{sec::Unified}
In this section, we show that correntropy based regression provides us a unified approach to learning the three canonical location functions, namely, the conditional mean, median, and mode functions, which further explains its powerfulness and robustness merits in learning.

\subsection{Learning with MCCR for Mean Regression}\label{subsec::mean}
Assuming $\mathbb{E}(\varepsilon|X)=0$, we first show that MCCR can learn the conditional mean function under the following conditional $(1+\epsilon)$-moment assumption and capacity assumption.  

\begin{assumption}\label{conditional_moment}
There exist some constants $M>0$ and $\epsilon>0$ such that 
\begin{align*}
\mathbb{E}(|Y|^{1+\epsilon}|X=x)\leq M, \quad \forall x\in\mathcal{X}. 
\end{align*}
\end{assumption}

\begin{assumption}\label{complexity_assumption} 
	There exist positive constants $q$ and $c$ such that
	$$\log\mathcal{N}(\mathcal{H},\eta)\leq c \eta^{-q},\,\, \forall\,\,
	\eta>0,$$
where the covering number $\mathcal{N}(\mathcal{H},\eta)$ is defined as the minimal $k\in\mathbb{N}$ such that there exist $k$ disks in $\mathcal{H}$ with radius $\eta$ covering $\mathcal{H}$.
\end{assumption}

The above capacity condition is typical in learning theory; see e.g., \cite{cucker2007learning,steinwart2008support}. And the conditional $(1+\epsilon)$-moment restriction in Assumption \ref{conditional_moment} is a weak one as it admits the case where light-tailed noise is absent and even the case where the noise possesses infinite conditional variance.  

\begin{theorem}\label{thm::prediction_error}
	Suppose that Assumptions \ref{conditional_moment} and \ref{complexity_assumption} hold and $f^\star\in\mathcal{H}$. Let $f_{\mathbf{z},\sigma}$ be produced by \eqref{MCCR} with $\sigma>1$. For any $0<\delta<1$, with probability at least $1-\delta$, it holds that 
	\begin{align*}
	\|f_{\mathbf{z},\sigma}-f^\star\|_{2,\rho}^2\lesssim \log(2/\delta)\left(\frac{1}{\sigma^{\min\{\epsilon,2\}}}+\frac{\sigma}{n^{1/(q+1)}}\right),
	\end{align*}
	where $\|\cdot\|_{2,\rho}^2$ denotes the $L_{\rho_\mathcal{X}}^2$ norm and the sign $\lesssim$ denotes that the underlying inequality holds up to an absolute constant factor. 
\end{theorem}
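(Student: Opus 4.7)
The plan follows the standard risk-decomposition route through the expected correntropy risk $\mathcal{R}_\sigma(f):=\mathbb{E}[\ell_\sigma(Y-f(X))]$ with $\ell_\sigma(t)=\sigma^2(1-e^{-t^2/\sigma^2})$, paired with its empirical counterpart $\mathcal{R}_{\mathbf{z},\sigma}$, so that $f_{\mathbf{z},\sigma}$ is an empirical minimizer. Since $f^\star\in\mathcal{H}$, a sandwich argument gives
\[
\mathcal{R}_\sigma(f_{\mathbf{z},\sigma})-\mathcal{R}_\sigma(f^\star)\le 2\sup_{f\in\mathcal{H}}\bigl|\mathcal{R}_{\mathbf{z},\sigma}(f)-\mathcal{R}_\sigma(f)\bigr|,
\]
so the work splits into \textbf{(A)} a comparison inequality relating $\|f-f^\star\|_{2,\rho}^2$ to the excess risk plus a $\sigma$-dependent bias, and \textbf{(B)} a uniform concentration bound for the empirical process above.

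\textbf{(A) Comparison inequality.} Fix $x$, let $h=f(x)-f^\star(x)\in[-2M,2M]$, and let $\varepsilon$ be distributed as $Y-f^\star(X)\mid X=x$. Taylor-expand
\[
\mathbb{E}\bigl[\ell_\sigma(\varepsilon-h)-\ell_\sigma(\varepsilon)\bigm|X\bigr]=-h\,\mathbb{E}[\ell_\sigma'(\varepsilon)\mid X]+\tfrac{h^2}{2}\mathbb{E}[\ell_\sigma''(\varepsilon)\mid X]+R_3,
\]
with $\ell_\sigma'(t)=2te^{-t^2/\sigma^2}$ and $\ell_\sigma''(t)=2e^{-t^2/\sigma^2}(1-2t^2/\sigma^2)$. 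The zero-mean condition rewrites the linear coefficient as $-2\mathbb{E}[\varepsilon(1-e^{-\varepsilon^2/\sigma^2})\mid X]$; the elementary inequality $|\varepsilon(1-e^{-\varepsilon^2/\sigma^2})|\le\min(|\varepsilon|^3/\sigma^2,|\varepsilon|)$ combined with splitting on $|\varepsilon|\lessgtr\sigma$ and Assumption~\ref{conditional_moment} gives $|\mathbb{E}[\ell_\sigma'(\varepsilon)\mid X]|\le C\sigma^{-\min(\epsilon,2)}$. A parallel splitting argument yields $\mathbb{E}[\ell_\sigma''(\varepsilon)\mid X]\ge 2-O(\sigma^{-\min(\epsilon,2)})\ge 1$ for $\sigma$ sufficiently large, while $\|\ell_\sigma'''\|_\infty=O(1/\sigma)$ together with $|h|\le 2M$ absorb $R_3$. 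Crucially, because $|h|\le 2M$ the residual linear term is bounded uniformly by $2MC\sigma^{-\min(\epsilon,2)}$ --- using $|h|\le 2M$ directly rather than Young's inequality avoids squaring the bias --- so integrating against $\rho_X$ produces
\[
\|f-f^\star\|_{2,\rho}^2\lesssim \mathcal{R}_\sigma(f)-\mathcal{R}_\sigma(f^\star)+\frac{1}{\sigma^{\min(\epsilon,2)}}.
\]

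\textbf{(B) Uniform concentration.} Exploit that $0\le\ell_\sigma\le\sigma^2$ and that $\ell_\sigma$ is $O(\sigma)$-Lipschitz, together with the variance control $\mathbb{E}[\ell_\sigma(Y-f)^2]\le\sigma^2\mathcal{R}_\sigma(f)$ inherited from $\ell_\sigma^2\le\sigma^2\ell_\sigma$. For an $\eta$-net of $\mathcal{H}$ with $\log N(\eta)\le c\eta^{-q}$ (Assumption~\ref{complexity_assumption}), apply Bernstein's inequality to each cover center, union-bound over the net, and pay $O(\sigma\eta)$ to pass to the full class. Optimizing via the balance $\eta\asymp(\sigma/n)^{1/(q+1)}$ yields, with probability at least $1-\delta$,
\[
\sup_{f\in\mathcal{H}}\bigl|\mathcal{R}_{\mathbf{z},\sigma}(f)-\mathcal{R}_\sigma(f)\bigr|\lesssim \frac{\sigma\log(2/\delta)}{n^{1/(q+1)}}.
\]
Combining this bound with the comparison inequality of (A) proves the theorem.

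\textbf{Main obstacle.} The crux is step (A). With only a conditional $(1+\epsilon)$-moment and possibly $\epsilon<1$ (so $\varepsilon$ may have infinite conditional variance), the classical square-loss comparison identity is unavailable; one must extract curvature from the averaged second derivative of $\ell_\sigma$ while cancelling the first-order Taylor term via $\mathbb{E}[\varepsilon\mid X]=0$. The $\min(\epsilon,2)$ saturation arises because once $\mathbb{E}[|\varepsilon|^3\mid X]<\infty$ the two-sided bound $|\varepsilon(1-e^{-\varepsilon^2/\sigma^2})|\le|\varepsilon|^3/\sigma^2$ is already tight at rate $1/\sigma^2$, and strengthening the moment condition beyond $\epsilon=2$ cannot improve it --- this is precisely the inherent bias of correntropy mean regression highlighted in the abstract and is the origin of the saturation effect on the final rate.
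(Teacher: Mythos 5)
Your overall architecture (a comparison inequality linking $\|f-f^\star\|_{2,\rho}^2$ to the excess $\ell_\sigma$-risk plus an $O(\sigma^{-\min\{\epsilon,2\}})$ bias, followed by a concentration step) matches the paper, and your step (A) is essentially a from-scratch derivation of the comparison inequality \eqref{comparison_inequality} that the paper imports from earlier work; the Taylor/truncation argument there is sound. The genuine gap is in step (B). A global bound on $\sup_{f\in\mathcal{H}}|\mathcal{R}_{\mathbf{z},\sigma}(f)-\mathcal{R}_\sigma(f)|$ via Bernstein on an $\eta$-net cannot deliver $\sigma n^{-1/(q+1)}$: Bernstein's bound at each net center is $\sqrt{2V\log(N/\delta)/n}+O(\sigma^2\log(N/\delta)/n)$, and your variance control $V\le\sigma^2\mathcal{R}_\sigma(f)$ does not shrink with $n$ (indeed $\mathcal{R}_\sigma(f)=\Theta(1)$ for $\epsilon\ge1$, and can be as large as $\sigma^{1-\epsilon}$ for $\epsilon<1$), so the square-root term dominates; balancing $\sigma\sqrt{\eta^{-q}/n}$ against the $O(\sigma\eta)$ discretization error forces $\eta\asymp n^{-1/(q+2)}$ and yields only $\sigma n^{-1/(q+2)}$. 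Your stated balance $\eta\asymp(\sigma/n)^{1/(q+1)}$ silently drops that dominant term.

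To reach $n^{-1/(q+1)}$ you must localize: work with the increment $\xi=\ell_\sigma(y-f(x))-\ell_\sigma(y-f^\star(x))$ rather than with $\ell_\sigma(y-f(x))$ itself, prove $\mathrm{var}(\xi)\lesssim\|f-f^\star\|_{2,\rho}^2$ when $\epsilon\ge1$ (and $\mathrm{var}(\xi)\lesssim\sigma^{1-\epsilon}$ when $0<\epsilon<1$), and apply a one-sided Bernstein inequality in ratio form, controlling
$\sup_{f\in\mathcal{H}}\big|[\mathcal{R}^{\sigma}(f)-\mathcal{R}^{\sigma}(f^\star)]-[\mathcal{R}^\sigma_{\mathbf{z}}(f)-\mathcal{R}^\sigma_{\mathbf{z}}(f^\star)]\big|/\sqrt{\mathcal{R}^{\sigma}(f)-\mathcal{R}^{\sigma}(f^\star)+2\gamma}$.
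The resulting $\sqrt{\gamma\cdot\text{excess}}$ term is then absorbed into half the excess risk, leaving only the linear-in-$\gamma$ piece, and setting the failure probability $\mathcal{N}(\mathcal{H},\gamma\sigma^{-1})e^{-n\gamma/(c\sigma)}\le\delta$ solves to $\gamma\asymp\sigma n^{-1/(q+1)}\log(2/\delta)$. This localization step, which is exactly what the paper's sketch carries out, is the ingredient your argument is missing; without it the stated rate does not follow.
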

 
Theorem \ref{thm::prediction_error} can be proved analogously as Theorem 2 in \cite{feng2020learning}. A sketch of its proof is provided in the appendix. Under the assumptions of Theorem \ref{thm::prediction_error}, if we set
$\sigma = n^{\Theta_\epsilon}$
where 
\begin{align*}
\Theta_\epsilon=
\begin{cases}
\frac{1}{(q+1)(\epsilon+1)},\quad\hbox{if}\quad 0<\epsilon\leq 2,\\
\frac{1}{3(q+1)},\quad\quad\,\,\,\,\hbox{if}\quad \epsilon >2,
\end{cases}
\end{align*}
then for any $0<\delta<1$, with probability at least $1-\delta$, it holds that 
\begin{align*}
\|f_{\mathbf{z},\sigma}-f^\star\|_{2,\rho}^2\lesssim \log(2/\delta)n^{-\min\{\epsilon,2\} \Theta_\epsilon}.   
\end{align*}
Therefore, with diverging $\sigma$ values, $f_{\mathbf{z},\sigma}$ approaches the conditional mean function $f^\star$. Not surprisingly, the established convergence rates depend on the capacity of $\mathcal{H}$ and the order of the moment condition in terms of the two indices $q$ and $\epsilon$. Moreover, for the case $0<\epsilon<1$ when the noise $\varepsilon$ possesses infinite conditional variance, exponential type convergence rates can still be obtained. Comparing with the results in \cite{feng2020learning}, with the conditional $(1+\epsilon)$-moment assumption, improved convergence rates are established. It is interesting to note that when $\epsilon\geq 2$, imposing higher-order conditional moment assumptions may not help improve the convergence rates of $\|f_{\mathbf{z},\sigma}-f^\star\|_{2,\rho}^2$. This phenomenon, also observed in \cite{feng2020learning}, is termed as the saturation effect in mean regression, which is caused by the introduction of the parameter $\sigma$, and is hence the cost of robustness.

\subsection{Learning with MCCR for Modal Regression}\label{subsec::modal}
MCCR can be also utilized to perform modal regression, that is, learning the conditional mode function defined in \cite{collomb1987note} as 
\begin{align}\label{condition_mode}
f_\mathsmaller{\sf{MO}}(x):=\arg\max_{t\in\mathbb{R}}p_{Y|X=x}(t),\quad x\in\mathcal{X}.
\end{align}  
In the data generating model \eqref{data_generating}, if we assume that $p_\mathsmaller{\varepsilon|X}$ admits a unique global mode for any realization of $X$, then $f_\mathsmaller{\sf{MO}}$ in \eqref{condition_mode} is well defined and is exactly $f^\star$. Recalling the results in Theorem \ref{population_risk}, we have $f_\mathsmaller{\sf{MO}}=\arg\max_{f\in\mathcal{M}}\mathbb{E}p_\mathsmaller{\varepsilon|X}(Y-f(X))$. As mentioned previously, assuming that the noise variable $\varepsilon$ is Gaussian and approximating $\mathbb{E}p_\mathsmaller{\varepsilon|X}(Y-f(X))$ by using its empirical counterpart, one can arrive at MCCR. However, directly imposing such a noise assumption seems to be a brute-force approach to learning the conditional mode. The following theorem established in \cite{feng2017statistical} provides an alternate formulation for characterizing the conditional mode function $f_\mathsmaller{\sf{MO}}$ and makes such a learning problem practically implementable. For the sake of completeness, we also provide its proof here. 
\begin{theorem}\label{density_change_variable} 
	Let $f: \mathcal{X}\rightarrow \mathbb{R}$ be any measurable function and $E_f=Y-f(X)$. Then, we have
	\begin{align*}
	f_\mathsmaller{\sf{MO}}=\arg\max_{f\in\mathcal{M}}p_\mathsmaller{E_f}(0). 
	\end{align*}
\end{theorem}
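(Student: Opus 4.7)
The plan is to express $p_\mathsmaller{E_f}(0)$ as an integral over $\mathcal{X}$ using the translation identity already spelled out in Section \ref{sec::interpretation}, and then argue that the integrand can be maximized pointwise in $x$ at the choice $f(x)=f^\star(x)$. Concretely, from the relation
\begin{align*}
p_\mathsmaller{E_f}(t)=\int_{\mathcal{X}}p_\mathsmaller{\varepsilon|X=x}(t+f(x)-f^\star(x))\,\mathrm{d}\rho_X(x)
\end{align*}
evaluated at $t=0$, we obtain
\begin{align*}
p_\mathsmaller{E_f}(0)=\int_{\mathcal{X}}p_\mathsmaller{\varepsilon|X=x}(f(x)-f^\star(x))\,\mathrm{d}\rho_X(x).
\end{align*}
This reduces the problem to an $x$-by-$x$ optimization of $p_\mathsmaller{\varepsilon|X=x}$ evaluated at the shift $f(x)-f^\star(x)$.

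Next, I would invoke the standing assumption that accompanies \eqref{condition_mode}: for each realization $x\in\mathcal{X}$, the conditional noise density $p_\mathsmaller{\varepsilon|X=x}$ admits a unique global mode, which (after the harmless centering discussed after \eqref{data_generating}) is located at $0$. Consequently, for every $x$ and every real shift $s$, $p_\mathsmaller{\varepsilon|X=x}(s)\leq p_\mathsmaller{\varepsilon|X=x}(0)$, with equality if and only if $s=0$. Applying this with $s=f(x)-f^\star(x)$ shows that the integrand above is pointwise maximized exactly when $f(x)=f^\star(x)$.

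Since $\|f^\star\|_\infty\leq M$, the function $f^\star$ itself lies in $\mathcal{M}$, so it is an admissible candidate. Integrating the pointwise inequality against $\rho_X$ then gives $p_\mathsmaller{E_f}(0)\leq p_\mathsmaller{E_{f^\star}}(0)$ for all $f\in\mathcal{M}$, with equality forcing $f=f^\star$ $\rho_X$-almost everywhere. Finally, under the data-generating model \eqref{data_generating}, the relation $p_\mathsmaller{Y|X=x}(t)=p_\mathsmaller{\varepsilon|X=x}(t-f^\star(x))$ immediately yields $f_\mathsmaller{\sf{MO}}(x)=f^\star(x)$, so the maximizer of $p_\mathsmaller{E_f}(0)$ over $\mathcal{M}$ is precisely $f_\mathsmaller{\sf{MO}}$.

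The main obstacle, if any, is simply bookkeeping around the uniqueness clause: one needs the global mode of $p_\mathsmaller{\varepsilon|X=x}$ to be unique uniformly enough that the pointwise maximization argument selects a bona fide function rather than a set-valued selection. Invoking the uniqueness hypothesis on $p_\mathsmaller{\varepsilon|X}$ that underlies the well-definedness of $f_\mathsmaller{\sf{MO}}$ in \eqref{condition_mode} handles this cleanly, and no further machinery is required.
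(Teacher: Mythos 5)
Your proof is correct and follows essentially the same route as the paper: both reduce $p_\mathsmaller{E_f}(0)$ to the integral $\int_{\mathcal{X}}p_\mathsmaller{\varepsilon|X=x}(f(x)-f^\star(x))\,\mathrm{d}\rho_X(x)=\int_{\mathcal{X}}p_\mathsmaller{Y|X=x}(f(x))\,\mathrm{d}\rho_X(x)$ via the translation identity and then maximize the integrand pointwise in $x$. The only difference is that you spell out the final pointwise-maximization step (using the unique zero-mode assumption on $p_\mathsmaller{\varepsilon|X}$ and the fact that $f^\star\in\mathcal{M}$), which the paper leaves implicit after deriving the identity.
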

\begin{proof} 
	From the model assumption that $\varepsilon = Y - f^\star(X)$, we have
	\begin{align*}
	\varepsilon = E_f + f(X) - f^\star(X).
	\end{align*}
	As a result, the density function of the residual variable $E_f$, denoted by $p_\mathsmaller{E_f}$, can be expressed as
	\begin{align*}
	\int_\mathcal{X}p_{\varepsilon|X=x}(\cdot + f(x) - f^\star(x))\mathrm{d}\rho_{\mathsmaller{\mathcal{X}}}(x).
	\end{align*}
	Moreover, we know that  
	\begin{align*}
	p_\mathsmaller{E_f}(0)&=\int_{\mathcal{X}}p_{\varepsilon|X=x}( f(x) - f^\star(x)) \mathrm{d}\rho_{\mathsmaller{\mathcal{X}}}(x) = \int_{\mathcal{X}}p_{\mathsmaller{Y|X=x}}(f(x))\mathrm{d}\rho_{\mathsmaller{\mathcal{X}}}(x).
	\end{align*}
	This completes the proof of Theorem \ref{density_change_variable}.
\end{proof}

As a consequence of Theorem \ref{density_change_variable}, one can approach the conditional mode function through the maximization of the kernel density estimator of $p_{\mathsmaller{E_f}}$ at the point $0$, i.e., 
\begin{align}\label{estimator_densityestimator} 
f_{\mathbf{z},\sigma}=\arg\max_{f\in\mathcal{H}}\frac{1}{n\sigma}\sum_{i=1}^n\exp\left(-\frac{(y_i-f(x_i))^2}{\sigma^2}\right).
\end{align}
Note that the estimator produced in \eqref{estimator_densityestimator} is essentially the same as the MCCR estimator \eqref{MCCR}. In the statistics literature, it has been well understood that the consistency of this density estimator can be guaranteed under mild conditions, e.g., $\sigma \rightarrow 0$ and $n\sigma\rightarrow +\infty$. However, the convergence of $f_{\mathbf{z},\sigma}$ to $f_\mathsmaller{\sf{MO}}$ cannot be readily obtained from the convergence of $\mathbb{E}p_\mathsmaller{\varepsilon|X}(Y-f_{\mathbf{z},\sigma}(X))$ to $\mathbb{E}p_\mathsmaller{\varepsilon|X}(Y-f_\mathsmaller{\sf{MO}}(X))$ due to the nonconvexity of the learning scheme and so calls for some special attention. In a recent study, some efforts in this regard are made in \cite{feng2017statistical} by imposing certain assumptions on the noise variable $\varepsilon$. Exponential-type convergence rates of $f_{\mathbf{z},\sigma}$ are established there when $\sigma:=\sigma(n)\rightarrow 0$, which theoretically justifies the learnability of $f_{\mathbf{z},\sigma}$ towards the conditional mode function. It should be remarked that here the nonconvexity may only matter in learning theory analysis of $f_{\mathbf{z},\sigma}$ when deriving its convergence rates. The story may be different when assessing it from an optimization viewpoint \cite{syed2014optimization}.

\subsection{Learning with MCCR for Median Regression}\label{subsec::median}
We now provide some perspectives on learning with MCCR for median regression. Under the regression model \eqref{data_generating} and the zero-median assumption ${\sf{median}}(\varepsilon|X)=0$, Theorem \ref{population_risk} tells us that as the population version of $f_{\mathbf{z},\sigma}$, $f_\mathsmaller{\mathcal{H}}$ maximizes $\mathbb{E}p_{\varepsilon|X}(Y-f(X))$ over $\mathcal{H}$. However, this neither implies the convergence of $f_{\mathbf{z},\sigma}$ to the conditional median  $f^\star$ nor indicates the convergence of $f_\mathcal{H}$ to $f^\star$. 

To see that $f_{\mathbf{z},\sigma}$ can serve as a median regression estimator, we consider a special case when the noise variable $\varepsilon$ is independent of the input variable $X$ and is symmetric stable, i.e., its characteristic function $\phi_\varepsilon$ admits the form
$\phi_\varepsilon(t)=e^{-\gamma |t|^\alpha}$,    
where $\gamma>0$ is a constant, and $0<\alpha\leq 2$ is the characteristic exponent. It is well known that, the normal distribution is stable with $\alpha=2$ and the Cauchy distribution is stable with $\alpha=1$. When $\alpha<2$, absolute moments of order less than $\alpha$ exist while those of order greater than or equal to $\alpha$ do not. Therefore, under the zero median assumption, $f^\star$ is, in fact, the conditional median function as the conditional mean function may not even be defined. According to \cite{feng2018learning}, in this case, MCCR can learn the conditional median function $f^\star$ well in the sense that   $\mathbb{E}p_\mathsmaller{\varepsilon|X}(Y-f_{\mathbf{z},\sigma}(X))\rightarrow \mathbb{E}p_\mathsmaller{\varepsilon|X}(Y-f^\star(X))$ implies $f_{\mathbf{z},\sigma}\rightarrow f^\star$ with a proper fixed  $\sigma$. Moreover, fast exponential-type convergence rates can be established. However, whether MCCR can learn the conditional median function $f^\star$ under more general conditions is still yet to be explored.

\section{Numerical Validations}\label{sec::experiments}
In this section, we conduct numerical simulations on synthetic data to validate our theoretical finding that MCCR provides a unified approach to learning location functions. 

To this end, as in \cite{feng2017statistical}, we consider the regression model
$y=f^\star(x)+\varepsilon$ where $x\sim U(0,1)$, $f^\star(x)=2\sin(\pi x)$, and the noise obeys the following two different distributions:
\begin{itemize}
    \item Case I: $\varepsilon=(1+2x)\kappa$ with  $\kappa \sim 0.5 N(-1,2.5^2)+0.5N(1,0.5^2)$;
    \item Case II: $\varepsilon \sim \hbox{Cauchy}(0,0.5)$.
\end{itemize}
For Case I, with simple computations, we know that the conditional mean function is 
$\mathbb{E}(Y|X)=2\sin(\pi x)$, and the conditional mode function is approximately ${\sf{mode}(Y|X)}=2\sin(\pi x)+1+2x$. For Case II, it is obvious that ${\sf{median}}(Y|X)=2\sin(\pi x)$. 

In our experiments, $200$ observations are drawn from the above data-generating model for training and the size of the test set is also $200$. The hypothesis space $\mathcal{H}$ is chosen as a subset of the Gaussian reproducing kernel Hilbert space by using Tikhonov regularization. The bandwidth of the Gaussian kernel and the regularization parameter are selected through five-fold cross-validation under the least absolute deviation criterion. Three experiments are conducted, respectively, in order to show that $f_{\mathbf{z},\sigma}$ can approach the three different location functions with different $\sigma$ values. The learned functions from the three experiments are plotted in Figs.\,1-3. For each experiment, the $\sigma$ value is set to be fixed and is specified in the captions of the three figures. Clearly, from the experiments, we see that with different choices of $\sigma$ values, $f_{\mathbf{z},\sigma}$  can  indeed approach the three location functions, which consequently demonstrates our theoretical finding empirically.

\begin{figure}[!h]
\tikzset{trim left=0.0 cm}
       \setlength\figureheight{4cm}
       \setlength\figurewidth{4cm}
      \begin{minipage}[b]{0.2\textwidth}
                   \input{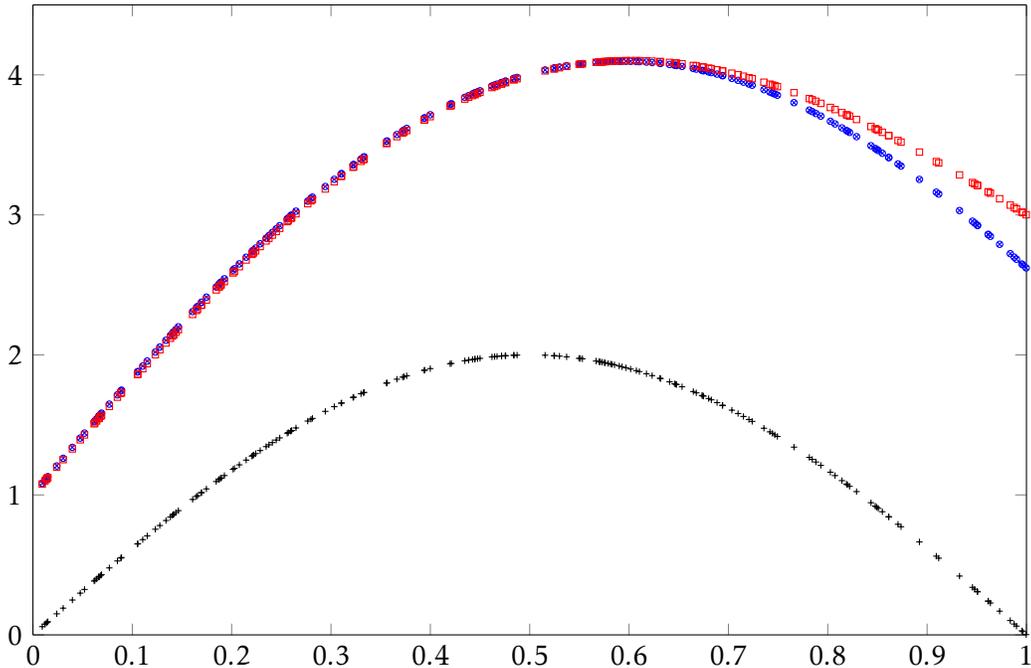}
                 \end{minipage} 
  \caption{Experimental results of Case I: the  red curve with square marks denotes the conditional mode function. The  black curve with plus marks gives the conditional mean function. The  blue curve with $\otimes$ marks represents the learned estimator $f_{\mathbf{z},\sigma}$ with $\sigma=0.05$.}\label{toy_mode}
\end{figure}

\begin{figure}[!h]
\tikzset{trim left=0.0 cm}
       \setlength\figureheight{4cm}
       \setlength\figurewidth{4cm}
               \begin{minipage}[b]{0.2\textwidth}
                \input{mean_estimator.tikz}
           \end{minipage}%
  \caption{Experimental results of Case I: the red curve with square marks denotes the conditional mode function. The black curve with plus marks gives the conditional mean function. The blue curve with $\otimes$ marks represents the learned estimator $f_{\mathbf{z},\sigma}$ with $\sigma=10$.}\label{toy_mean}
\end{figure} 

\begin{figure}[!h]
\tikzset{trim left=0.0 cm}
       \setlength\figureheight{4cm}
       \setlength\figurewidth{4cm}
               \begin{minipage}[b]{0.2\textwidth}
                \input{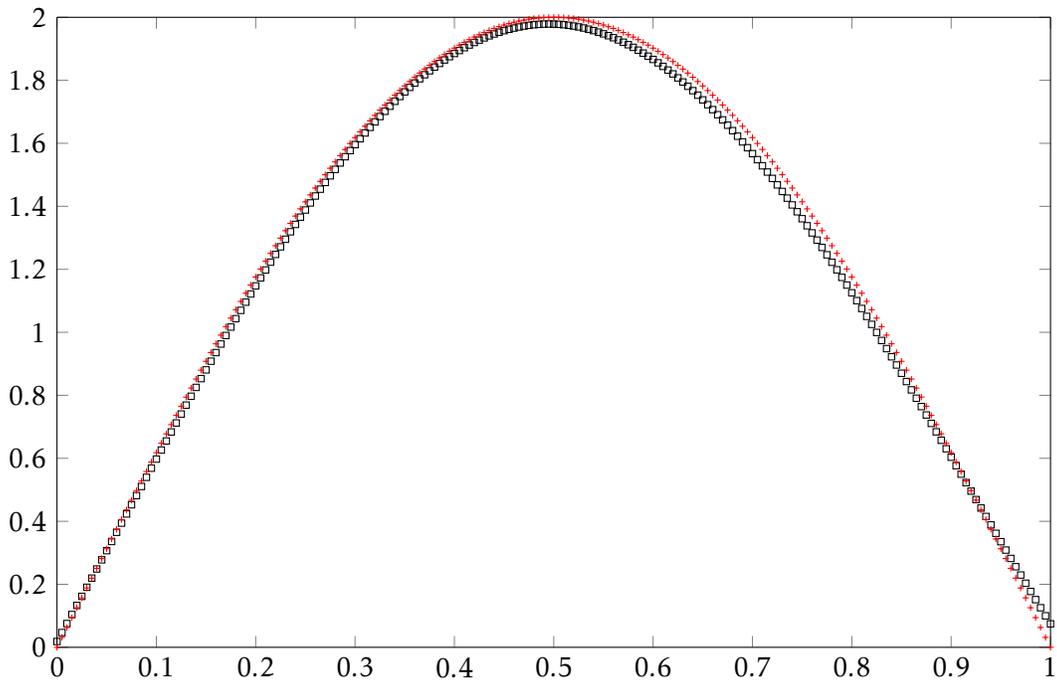}
           \end{minipage}%
  \caption{Experimental results of Case II: the red curve with square marks denotes the conditional median function. The black curve with plus marks represents the learned estimator $f_{\mathbf{z},\sigma}$ with $\sigma=0.01$.}\label{toy_median}
\end{figure} 

\section{Conclusion and Future Work}\label{sec::conclusion}
In this paper, we studied the correntropy based regression by drawing some novel insights into it. We first concluded that the resulting regression estimator can be viewed as a minimum distance estimator, which helps understand its robustness property. Moreover, this finding indicates its practical applicability as it simultaneously enjoys the nice properties of both information-theoretic learning and minimum distance estimation. We then showed that the regression estimator can work effectively in learning the unknown truth function as it is capable of learning different location functions under certain noise restrictions by tuning the scale parameter adaptively. When learning the mean regression function, the established exponential type convergence rates under weak conditional moment assumptions remind us of the existence of the saturation effect caused by some inherent bias. These insights widen our understanding of the regression scheme, help cement the theoretic correntropy framework, and also enable us to investigate learning schemes induced by general bounded nonconvex loss functions.

Yet, there are still several problems that need to be addressed in order to paint a complete picture of the regression paradigm. Here we exemplify several of the problems. First, the saturation effect reported in this study limits the learnability of the resulting regression estimator. This is because even in the presence of light-tailed noise, e.g., skewed Gaussian, the established convergence rates in mean regression can only be up to $\mathcal{O}(n^{-2/3})$ which are not even comparable with those of least squares regression estimators in the same situation. Therefore, further efforts should be made to debias when implementing correntropy based mean regression. Second, we discussed above only a specific case when correntropy based regression regresses towards the conditional median function by requiring that the noise is symmetric stable. It is still unknown whether one could further relax such a stringent restriction on the noise or what kind of $\sigma$ values one should choose. Third, we only investigated here the problem of learning location functions through MCCR. It would be also interesting to investigate the problem of learning generalized location functions using similar approaches. In addition, in the present study, we only consider the hypothesis space in which all functions are uniformly bounded. In practice, the hypothesis space is typically automatically chosen by a penalized ERM scheme where functions are generally no longer uniformly bounded. It is still unclear in this case how to derive exponential-type convergence rates of the correntropy based regression estimators without imposing light-tailed noise assumptions on the noise. The above-exemplified research problems illustrate our future work on this topic.

\section*{Appendix} 
In this appendix section, we provide a sketch of the proof of Theorem \ref{thm::prediction_error}, which is accomplished by using similar arguments as in the proof of Theorem 2 in \cite{feng2020learning}. The key difference is that, in the present study, with the conditional $(1+\epsilon)$-moment restriction stated in Assumption \ref{conditional_moment}, one can obtain refined variance estimates for $\xi$ defined below, which lead to improved convergence rates.   

Before proving the theorem, we first introduce some notation. For any measurable function $f:\mathcal{X}\rightarrow \mathbb{R}$, we denote 
\begin{align*}
\mathcal{R}^\sigma(f)=\mathbb{E}\ell_\sigma(Y-f(X))
\end{align*}
as its generalization error and denote its empirical generalization error as
\begin{align*}
\mathcal{R}^\sigma_\mathbf{z}(f)=\frac{1}{n}\sum_{i=1}^n\ell_\sigma(y_i-f(x_i)).
\end{align*}
We further denote $f_{\mathcal{H},\sigma}$ as the population version of $f_{\mathbf{z},\sigma}$ in $\mathcal{H}$, that is,  
	\begin{align*}
	f_{\mathcal{H},\sigma}:=\arg\min_{f\in\mathcal{H}}\mathcal{R}^\sigma(f).
	\end{align*}
	We also denote $f_\mathcal{H}$ as the ``best" function in $\mathcal{H}$ when approximating $f^\star$ in the following sense
	\begin{align*}
	 f_\mathcal{H}=\arg\min_{f\in\mathcal{H}}\|f-f^\star\|_{2,\rho}^2.
	 \end{align*}
Under Assumption \ref{conditional_moment}, Theorem 1 in \cite{feng2020learning} tells us that for any measurable function $f:\mathcal{X}\rightarrow \mathbb{R}$ with $\|f\|_\infty\leq M$ and $\sigma>1$, it holds that	
	\begin{align}\label{comparison_inequality} 
	\Big|\left[\mathcal{R}^\sigma(f)-\mathcal{R}^\sigma(f^\star)\right]- \|f-f^\star\|_{2,\rho}^2 \Big|\leq\cfrac{c_\mathsmaller{\mathcal{H},\epsilon}}{\sigma^{\theta_\epsilon}},
	\end{align}
	where, for any fixed $\epsilon$, the constant $\theta_\epsilon$ is given by
	$\theta_\epsilon=\min\{\epsilon,2\}$,
	and  
	$c_\mathsmaller{\mathcal{H},\epsilon}$ is an absolute constant independent of $f$ or $\sigma$.

    To prove the theorem, for any $f\in\mathcal{H}$, we denote $\xi(x,y)$ as the following random variable 
	\begin{align*}
	\xi(x,y)=\ell_\sigma(y-f(x))-\ell_\sigma(y-f^\star(x)),\, (x,y)\in\mathcal{X}\times\mathcal{Y}.
	\end{align*}
    We can bound the variance of the random variable $\xi$ by considering two different cases of the $\epsilon$ values. When $\epsilon\geq  1$, we have  
	\begin{align*}
	\hbox{var}(\xi)\leq\mathbb{E}\xi^2 &\leq   \mathbb{E}\left(\ell_\sigma(y-f(x))-\ell_\sigma(y-f^\star(x))\right)^2\\
	&\leq \mathbb{E}\left((y-f(x))^2-(y-f^\star(x))^2\right)^2
	\\
	&\leq c_1  \|f-f^\star\|_{2,\rho}^{2},
	\end{align*}
	where $c_1 =18M^2$, the third inequality is obtained by applying the mean value theorem. When $0<\epsilon<1$, the variance of $\xi$ can be bounded as follows 
	\begin{align*}
	\hbox{var}(\xi)\leq \mathbb{E}\xi^2 &\leq   \mathbb{E}\left(\ell_\sigma(y-f(x))-
	\ell_\sigma(y-f^\star(x))\right)^2\\
	&\leq \sigma^{1-\varepsilon} \|f-f^\star\|_\infty ^{1-\epsilon}\mathbb{E}\left|\ell_\sigma(y-f^\star(x))-
	\ell_\sigma(y-f(x))\right|^{1+\epsilon} \\
	&\leq \sigma^{1-\epsilon}((3M)^{1+\epsilon}+3^\epsilon  \mathbb{E}|Y|^{1+\epsilon})\|f-f^\star\|^2_\infty \leq c_2  \sigma^{1-\epsilon},
	\end{align*}
	where $c_2 =2M^2 ((3M)^{1+\epsilon}+3^\epsilon  \mathbb{E}|Y|^{1+\epsilon})$, and the third and the fourth inequalities are again obtained by applying the mean value theorem. Then, using the similar arguments as in the proof of Theorem 2 in \cite{feng2020learning}, one can accomplish the proof through the following key steps. 
    
    First, under Assumption \ref{conditional_moment} and $\sigma>1$, for any $\gamma\geq c_\mathsmaller{\mathcal{H},\epsilon}\sigma^{-\theta_\epsilon}$, with probability at most $\mathcal{N}\left(\mathcal{H}, \gamma \sigma^{-1}\right) e^{-\frac{n\gamma}{c_3\sigma}}$, it holds that 
	\begin{align*}
	\sup_{f\in\mathcal{H}}\left\{
	\frac{\big|[\mathcal{R}^{\sigma}(f)-\mathcal{R}^{\sigma}(f^\star)]-
		[\mathcal{R}^\sigma_{\mathbf{z}}(f)-\mathcal{R}^\sigma_{\mathbf{z}}(f^\star)]\big|}
	{\sqrt{\mathcal{R}^{\sigma}(f)-\mathcal{R}^{\sigma}(^\star)+2\gamma}}\right\}
	>4\sqrt{\gamma},
	\end{align*}
	where $c_3$ is a positive constant independent of $\sigma$. This probability ratio inequality is established by applying the one-sided Bernstein inequality and utilizing the compactness as well as the complexity assumption of the hypothesis space $\mathcal{H}$. 
	
Second, denoting  
\begin{align*}
	\gamma_0= 
	\frac{1}{\sigma^{\theta_\epsilon}}+\log\left(\frac{2}{\delta}\right)\frac{\sigma}{n^{1/(q+1)}},
\end{align*} 
then one can prove that for any $0<\delta<1$, with probability at least $1-\delta/2$, it holds that
	\begin{align*}
	[\mathcal{R}^{\sigma}(f_{\mathbf{z},\sigma})-\mathcal{R}^{\sigma}(f^\star)]-
	[\mathcal{R}^\sigma_{\mathbf{z}}(f_{\mathbf{z},\sigma})-\mathcal{R}^\sigma_{\mathbf{z}}(f^\star)]- \frac{1}{2}[\mathcal{R}^{\sigma}(f_{\mathbf{z},\sigma})-\mathcal{R}^{\sigma}(f^\star)]\lesssim \gamma_0,
	\end{align*}
and that
	\begin{align*}
	[\mathcal{R}^\sigma_{\mathbf{z}}(f_{\mathcal{H},\sigma})-\mathcal{R}^\sigma_{\mathbf{z}}(f^\star)]-[\mathcal{R}^\sigma(f_{\mathcal{H},\sigma})-\mathcal{R}^\sigma(f^\star)]- \frac{1}{2}\|f_\mathcal{H}-f^\star\|_{2,\rho}^2\lesssim \gamma_0.
	\end{align*}

Third, combining the above two estimates, with simple computations, it can be shown that for any $0<\delta<1$, with probability at least $1-\delta$, one has
	\begin{align*}
	\|f_{\mathbf{z},\sigma}-f^\star\|_{2,\rho}^2\lesssim \|f_\mathcal{H}-f^\star\|_{2,\rho}^2+\log(2/\delta)\left(\frac{1}{\sigma^{\theta_\epsilon}}+\frac{\sigma}{n^{1/(q+1)}}\right).
	\end{align*}
Recalling that $f^\star\in\mathcal{H}$, we have $\|f_\mathcal{H}-f^\star\|_{2,\rho}^2=0$ and thus arrive at the desired error bound. This gives a sketch of the proof of Theorem \ref{thm::prediction_error}.

\section*{Acknowledgement}
The author would like to thank the reviewers and Dr. Qiang Wu for insightful comments that improved the quality of this paper. The work of the author was partially supported by the Simons Foundation Collaboration Grant \#572064 and the Ralph E. Powe Junior Faculty Enhancement Award by Oak Ridge Associated Universities.

\bibliographystyle{plain}
\bibliography{library}

\begin{thebibliography}{10}

\bibitem{basu2011statistical}
Ayanendranath Basu, Hiroyuki Shioya, and Chanseok Park.
\newblock {\em Statistical {I}nference: The {M}inimum {D}istance {A}pproach}.
\newblock CRC Press, 2011.

\bibitem{chen2017maximum}
Badong Chen, Xi~Liu, Haiquan Zhao, and Jos{\'e}~C. Pr{\'\i}ncipe.
\newblock Maximum correntropy {K}alman filter.
\newblock {\em Automatica}, 76:70--77, 2017.

\bibitem{chen2012maximum}
Badong Chen and Jos{\'e}~C. Pr{\'\i}ncipe.
\newblock Maximum correntropy estimation is a smoothed {MAP} estimation.
\newblock {\em IEEE Signal Processing Letters}, 19(8):491--494, 2012.

\bibitem{chen2016generalized}
Badong Chen, Lei Xing, Haiquan Zhao, Nanning Zheng, and Jos{\'e}~C.
  Pr{\'\i}ncipe.
\newblock Generalized correntropy for robust adaptive filtering.
\newblock {\em IEEE Transactions on Signal Processing}, 64(13):3376--3387,
  2016.

\bibitem{chen2018kernel}
Hong Chen and Yulong Wang.
\newblock Kernel-based sparse regression with the correntropy-induced loss.
\newblock {\em Applied and Computational Harmonic Analysis}, 44(1):144--164,
  2018.

\bibitem{collomb1987note}
Gérard Collomb, Wolfgang H{\"a}rdle, and Salima Hassani.
\newblock A note on prediction via estimation of the conditional mode function.
\newblock {\em Journal of Statistical Planning and Inference}, 15(2):227--236,
  1987.

\bibitem{cucker2007learning}
Felipe Cucker and Ding-Xuan Zhou.
\newblock {\em Learning Theory: An Approximation Theory Viewpoint}.
\newblock Cambridge University Press, 2007.

\bibitem{donoho1988automatic}
David~L. Donoho and Richard~C. Liu.
\newblock The ``automatic" robustness of minimum distance functionals.
\newblock {\em The Annals of Statistics}, 16(2):552--586, 1988.

\bibitem{fan2014consistency}
Jun Fan, Ting Hu, Qiang Wu, and Ding-Xuan Zhou.
\newblock Consistency analysis of an empirical minimum error entropy algorithm.
\newblock {\em Applied and Computational Harmonic Analysis}, 41(1):164--189,
  2016.

\bibitem{feng2017statistical}
Yunlong Feng, Jun Fan, and Johan~A.K. Suykens.
\newblock A statistical learning approach to modal regression.
\newblock {\em Journal of Machine Learning Research}, 21(2):1--35, 2020.

\bibitem{fenglearning}
Yunlong Feng, Xiaolin Huang, Lei Shi, Yuning Yang, and Johan~A.K. Suykens.
\newblock Learning with the maximum correntropy criterion induced losses for
  regression.
\newblock {\em Journal of Machine Learning Research}, 16:993--1034, 2015.

\bibitem{feng2020learning}
Yunlong Feng and Qiang Wu.
\newblock Learning under $(1+\epsilon)$-moment conditions.
\newblock {\em Applied and Computational Harmonic Analysis}, 49(2):495--520,
  2020.

\bibitem{feng2018learning}
Yunlong Feng and Yiming Ying.
\newblock Learning with correntropy-induced losses for regression with mixture
  of symmetric stable noise.
\newblock {\em Applied and Computational Harmonic Analysis}, 48(2):795--810,
  2020.

\bibitem{guo2018gradient}
Zheng-Chu Guo, Ting Hu, and Lei Shi.
\newblock Gradient descent for robust kernel-based regression.
\newblock {\em Inverse Problems}, 34(6):065009, 2018.

\bibitem{hampel2011robust}
Frank~R. Hampel, Elvezio~M. Ronchetti, Peter~J. Rousseeuw, and Werner~A.
  Stahel.
\newblock {\em Robust Statistics: The Approach Based on Influence Functions}.
\newblock John Wiley \& Sons, 2011.

\bibitem{he2011maximum}
Ran He, Wei-Shi Zheng, and Bao-Gang Hu.
\newblock Maximum correntropy criterion for robust face recognition.
\newblock {\em IEEE Transactions on Pattern Analysis and Machine Intelligence},
  33(8):1561--1576, 2011.

\bibitem{hu2020kernel}
Ting Hu.
\newblock Kernel-based maximum correntropy criterion with gradient descent
  method.
\newblock {\em Communications on Pure \& Applied Analysis}, 19(8):4159, 2020.

\bibitem{hu2013learning}
Ting Hu, Jun Fan, Qiang Wu, and Ding-Xuan Zhou.
\newblock Learning theory approach to minimum error entropy criterion.
\newblock {\em Journal of Machine Learning Research}, 14:377--397, 2013.

\bibitem{huber2009robust}
Peter~J. Huber and Elvezio Ronchetti.
\newblock {\em Robust Statistics}.
\newblock Wiley, 2009.

\bibitem{li2020error}
Bingzheng Li and Zhengzhan Dai.
\newblock Error analysis on regularized regression based on the maximum
  correntropy criterion.
\newblock {\em Mathematical Foundations of Computing}, 3(1):25--40, 2020.

\bibitem{liu2007correntropy}
Weifeng Liu, Puskal~P. Pokharel, and Jos{\'e}~C. Pr{\'\i}ncipe.
\newblock Correntropy: properties and applications in non-{G}aussian signal
  processing.
\newblock {\em IEEE Transactions on Signal Processing}, 55(11):5286--5298,
  2007.

\bibitem{lv2019optimal}
Fusheng Lv and Jun Fan.
\newblock Optimal learning with {G}aussians and correntropy loss.
\newblock {\em Analysis and Applications}, in press, 2020.

\bibitem{maronna2006robust}
Ricardo Maronna, Douglas Martin, and Victor Yohai.
\newblock {\em Robust Statistics: Theory and Methods}.
\newblock John Wiley \& Sons, Chichester. ISBN, 2006.

\bibitem{principe2010information}
Jos{\'e}~C. Pr{\'\i}ncipe.
\newblock {\em Information Theoretic Learning: Renyi's Entropy and Kernel
  Perspectives}.
\newblock Springer Science \& Business Media, 2010.

\bibitem{steinwart2008support}
Ingo Steinwart and Andreas Christmann.
\newblock {\em Support Vector Machines}.
\newblock Springer, New York, 2008.

\bibitem{syed2014optimization}
Mujahid~N. Syed, Panos~M. Pardalos, and Jos{\'e}~C. Pr{\'\i}ncipe.
\newblock On the optimization properties of the correntropic loss function in
  data analysis.
\newblock {\em Optimization Letters}, 8(3):823--839, 2014.

\bibitem{zhang2013robust}
Yingya Zhang, Zhenan Sun, Ran He, and Tieniu Tan.
\newblock Robust subspace clustering via half-quadratic minimization.
\newblock In {\em Proceedings of the IEEE International Conference on Computer
  Vision}, pages 3096--3103, 2013.

\end{thebibliography}

\end{document}